\documentclass[11pt]{article}

\usepackage[margin=1in]{geometry}
\usepackage[T1]{fontenc}
\usepackage{lmodern}
\usepackage{amsmath,amssymb,amsthm,mathtools}
\usepackage{microtype}
\usepackage{xcolor}
\usepackage[colorlinks=true,linkcolor=blue!55!black,urlcolor=blue!55!black,citecolor=blue!55!black]{hyperref}

\newtheorem{theorem}{Theorem}
\newtheorem{lemma}{Lemma}
\newtheorem{proposition}{Proposition}
\newtheorem{corollary}{Corollary}

\newcommand{\E}{\mathbb{E}}
\newcommand{\Pp}{\mathbb{P}}
\newcommand{\norm}[1]{\left\lVert #1\right\rVert}
\newcommand{\abs}[1]{\left\lvert #1\right\rvert}

\title{Logarithmic-Free Moment and Generalization Bounds for Uniformly
Stable Algorithms}
\author{
Thanh Nguyen-Cung\thanks{VinUniversity. Emails: \href{mailto:24thanh.nc@vinuni.edu.vn}{24thanh.nc@vinuni.edu.vn}, \href{mailto:binh.nt2@vinuni.edu.vn}{binh.nt2@vinuni.edu.vn}} 
\and 
Binh T. Nguyen\footnotemark[1]
}
\date{}

\begin{document}

\maketitle

\begin{abstract}
Uniform stability is a classical tool for controlling the generalization error of a learning algorithm. Bousquet, Klochkov, and Zhivotovskiy \cite{bkz2020} showed that the problem can be reduced to a moment inequality for a sum of weakly interacting functions of independent random variables. Their bound contains an additional factor $\log n$, and they asked whether this factor can be removed. We answer this upper-bound question affirmatively. More specifically, let $Z=(Z_1,\ldots,Z_n)$ have independent coordinates and let $g_i(Z)$ satisfy
\(\E[g_i(Z)\mid Z_{-i}]=0, \
\abs{\E[g_i(Z)\mid Z_i]}\le M,\) for every \(i = 1, \ldots, n,\)
where $Z_{-i}$ denotes all coordinates except $Z_i$. Assume additionally that changing any coordinate $Z_j$, $j\neq i$, changes $g_i$ by at most $\beta$, we prove that, for every $p\ge2$,
\[
\norm{\sum_{i=1}^n g_i(Z)}_{L_p}
\le 16pn\beta+M\sqrt{2pn}.
\]
This removes the $\log n$ factor from the previous bound and matches the lower bound of Bousquet, Klochkov, and Zhivotovskiy up to universal constants in the range covered by their construction.
Our proof first establishes the required estimate on the Rademacher cube, then transfers it to arbitrary product distributions by a two-copy randomization argument.
\end{abstract}

\section{Introduction}\label{sec:introduction}

Algorithmic stability provides a direct way to control the generalization error of a learning algorithm by measuring how sensitive its output is to changes in the training sample. Formally, let \(S=(Z_1,\ldots,Z_n)\sim P^n\) be an i.i.d.\ sample, let \(A_S\) denote the predictor returned by a learning algorithm \(A\), and let \(\ell\) be the loss. The population and empirical risks are
\[
R(A_S)=\E_{Z\sim P}\ell(A_S,Z), \qquad R_{\mathrm{emp}}(A_S)=\frac1n\sum_{i=1}^n\ell(A_S,Z_i),
\]
and their difference \(R(A_S)-R_{\mathrm{emp}}(A_S)\) is the generalization error. Following Bousquet and Elisseeff \cite{bousquet2002stability}, an algorithm is \(\gamma\)-uniformly stable if, whenever two samples \(S\) and \(S'\) differ in one coordinate,
\[
\sup_z\abs{\ell(A_S,z)-\ell(A_{S'},z)}\le\gamma.
\]
Uniform stability has become a standard tool for analyzing generalization, including for regularized learning methods \cite{bousquet2002stability} and stochastic gradient algorithms \cite{hardt2016train}. For bounded losses \(0\le\ell\le L\), the classical bounded-differences argument of Bousquet and Elisseeff gives, up to universal constants,
\[
\abs{R(A_S)-R_{\mathrm{emp}}(A_S)}
\lesssim
\left(\sqrt n\,\gamma+\frac{L}{\sqrt n}\right)\sqrt{\log\frac1\delta}
\]
with probability at least \(1-\delta\). Here, the factor \(\sqrt n\) multiplying the stability parameter is undesirable; for example, when \(\gamma\asymp n^{-1/2}\), this bound does not vanish with \(n\). Feldman and Vondr\'ak \cite{feldman2018generalization} showed that this loss is not intrinsic to uniform stability. They obtained the high-probability bound
\[
\abs{R(A_S)-R_{\mathrm{emp}}(A_S)}
\lesssim
\left(\sqrt{\gamma L}+\frac{L}{\sqrt n}\right)\sqrt{\log\frac1\delta},
\]
together with the second-moment estimate
\[
\E\left(R(A_S)-R_{\mathrm{emp}}(A_S)\right)^2\lesssim\gamma^2+\frac{L^2}{n}.
\]
The latter suggests that the natural dependence on stability should be linear in \(\gamma\). In subsequent work, Feldman and Vondr\'ak \cite{feldman2019highprob} obtained such a dependence, up to logarithmic factors:
\[
\abs{R(A_S)-R_{\mathrm{emp}}(A_S)}
\lesssim
\gamma\log n\log\frac{n}{\delta}
+
L\sqrt{\frac{\log(1/\delta)}{n}}.
\]
This left open whether the logarithmic dependence on the sample size is necessary. Bousquet, Klochkov, and Zhivotovskiy \cite{bkz2020} isolated this question through a product space moment problem. More specifically, let \(Z=(Z_1,\ldots,Z_n)\) have independent coordinates and let
\(
F(Z)=\sum_{i=1}^n g_i(Z),
\)
where, for every \(i\),
\[
\E[g_i(Z)\mid Z_{-i}]=0, \qquad \abs{\E[g_i(Z)\mid Z_i]}\le M,
\]
and changing \(Z_j\), for \(j\neq i\), changes \(g_i\) by at most \(\beta\). Here, \(M\) controls the contribution associated directly with the distinguished coordinate \(Z_i\), while \(\beta\) controls the interaction with the remaining coordinates. In the application to uniformly stable learning, these parameters correspond to the loss scale and the stability parameter, respectively. The authors then proved that, for every \(p\ge2\),
\begin{equation}\label{eq:intro-bkz}
\norm{\sum_{i=1}^n g_i(Z)}_{L_p}
\le
12\sqrt2\,pn\beta\lceil\log_2 n\rceil
+
4M\sqrt{pn}.
\end{equation}
Combined with their reduction from stable learning to this product space problem, this yields
\[
\abs{R(A_S)-R_{\mathrm{emp}}(A_S)}
\lesssim
\gamma\log n\log\frac1\delta
+
L\sqrt{\frac{\log(1/\delta)}{n}}
\]
with high probability. Thus, the remaining sample size dependence is concentrated in the factor \(\log n\) multiplying the interaction term in \eqref{eq:intro-bkz}. Their lower bound shows that this logarithmic factor is the only gap in the moment estimate. More precisely, Proposition~9 of \cite{bkz2020} constructs functions on the Rademacher cube such that, for an absolute constant \(\kappa>0\) and \(\kappa\le p\le n\),
\[
\norm{\sum_{i=1}^n g_i(Z)}_{L_p}
\gtrsim
pn\beta+M\sqrt{pn}.
\]
Hence, the dependence on \(p\), \(n\), \(\beta\), and \(M\) in \eqref{eq:intro-bkz} is optimal up to universal constants, except for the factor \(\log n\). Bousquet, Klochkov, and Zhivotovskiy therefore asked whether this factor can be removed for moments of order \(p>2\).

This question is specific to what follows from the centering and coordinate-sensitivity assumptions alone. Several related works obtain sharper guarantees under additional structure, including second-order difference conditions \cite{maurer2017second}, more general concentration assumptions \cite{kontorovich2014concentration}, data-dependent or local notions of stability \cite{foster2019hypothesis,deng2021locally}, Bernstein conditions \cite{klochkov2021stability}, and \(L_2\)-stability for randomized algorithms \cite{yuan2023l2}. On the lower-bound side, Liu and Lu \cite{liulu2020lower} constructed uniformly stable algorithms whose generalization error is of order
\(
\gamma+\frac{L}{\sqrt n}
\)
with constant probability. Thus, at constant confidence, the linear stability scale \(\gamma\) and the sampling scale \(L/\sqrt n\) are both unavoidable.

\paragraph{Our contribution.}
We remove the logarithmic factor in \eqref{eq:intro-bkz}. Under the same assumptions, we prove that, for every \(p\ge2\),
\begin{equation}\label{eq:intro-main}
\norm{\sum_{i=1}^n g_i(Z)}_{L_p}
\le
16pn\beta+M\sqrt{2pn}.
\end{equation}
Together with Proposition~9 of \cite{bkz2020}, this determines the moment growth up to universal constants for \(\kappa\le p\le n\). In particular, the interaction term has the optimal order \(pn\beta\) throughout the range covered by their lower-bound construction, with no logarithmic dependence on \(n\).

Applying the reduction of \cite{bkz2020} to a \(\gamma\)-uniformly stable algorithm with \(0\le\ell\le L\) gives, for every \(p\ge2\),
\[
\norm{R(A_S)-R_{\mathrm{emp}}(A_S)}_{L_p}
\le
33p\gamma
+
L\sqrt{\frac{2p}{n}}.
\]
Taking \(p\) of order \(\log(1/\delta)\) yields
\[
\abs{R(A_S)-R_{\mathrm{emp}}(A_S)}
\lesssim
\gamma\log\frac1\delta
+
L\sqrt{\frac{\log(1/\delta)}{n}}
\]
with probability at least \(1-\delta\). Thus, the stability contribution no longer contains any logarithmic dependence on the sample size. At constant confidence, the resulting rate is \(\gamma+L/\sqrt n\), matching the lower bound of Liu and Lu \cite{liulu2020lower} up to universal constants.

\paragraph{Proof outline.}
We first decompose
\[
g_i(Z)=\zeta_i(Z_i)+h_i(Z), \qquad \zeta_i(Z_i)=\E[g_i(Z)\mid Z_i].
\]
The variables \(\zeta_i(Z_i)\) are independent, centered, and bounded by \(M\), and therefore contribute at the standard \(M\sqrt{pn}\) scale. The residual terms satisfy the two centering conditions
\[
\E[h_i\mid Z_i]=0, \qquad \E[h_i\mid Z_{-i}]=0.
\]
We first prove the interaction bound for these doubly centered functions on the Rademacher cube. The main step is to control an exponential moment, which leads to a product whose expectation can be interpreted as the expected number of fixed points of a random map on the cube. We bound this quantity using its second factorial moment, obtaining an estimate that does not depend on the dimension. This allows us to avoid the partition argument over several scales that produces the factor $\log n$ in \cite{bkz2020}. We then extend the cube estimate to arbitrary product distributions by introducing an independent copy of the coordinates together with auxiliary Rademacher signs. This randomization creates an additional centering term, which we control separately using bounded differences. Combining these estimates gives the bound in \eqref{eq:intro-main}. 

\section{Notations and Preliminaries}\label{sec:preliminaries}

We begin by listing the notations used throughout the paper. For a positive integer $n$, we write $[n]=\{1,\ldots,n\}$. 
If $X$ is a real-valued random variable and $p\ge1$, we denote its $L_p$ norm by $\norm{X}_{L_p} = \left(\E\abs{X}^p\right)^{1/p}.$
Unless stated otherwise, all expectations are taken with respect to all random variables involved.

Our main results concern functions of independent coordinates. Let
$Z=(Z_1,\ldots,Z_n)$ be a random vector with independent coordinates, and
write
\(
Z_{-i}
=
(Z_1,\ldots,Z_{i-1},Z_{i+1},\ldots,Z_n)
\)
for the collection of all coordinates except $Z_i$. When an independent copy
of $Z$ is needed, we denote it by $Z'=(Z'_1,\ldots,Z'_n)$, where $Z'$ has the
same distribution as $Z$ and is independent of it.

We will occasionally condition on part of the randomness while taking
moments with respect to the remaining variables. If $X=X(U,V)$, we use the
notation
\(
\norm{X}_{L_p(U\mid V)}
=
\left(
\E_U\!\left[\abs{X}^p\mid V\right]
\right)^{1/p}
\)
for the conditional $L_p$ norm obtained by averaging over $U$ while keeping
$V$ fixed. Consequently, if
$\norm{X}_{L_p(U\mid V)}\le C$ almost surely in $V$, then
$\norm{X}_{L_p}\le C$.

We also need a convenient way to describe the effect of changing a single
coordinate. For a function $f$ on a product space, we say that its
$j$th-coordinate sensitivity is at most $\beta$ if
\(
\abs{f(z)-f(\widetilde z)}
\le \beta
\)
whenever $z$ and $\widetilde z$ differ only in coordinate $j$.

The first part of the proof is carried out on the Rademacher cube. Let
$\varepsilon=(\varepsilon_1,\ldots,\varepsilon_m)$ have independent
Rademacher coordinates. For a function
$u:\{-1,1\}^m\to\mathbb R$, define
\(
\E_i u = \E\!\left[u(\varepsilon)\mid\varepsilon_{-i}\right],
\ 
D_i u = u-\E_i u.
\)
For $x\in\{-1,1\}^m$, let $x^{(j)}$ denote the point obtained from $x$ by
flipping its $j$th coordinate. Since $\E_j$ averages over the two possible
values of that coordinate,
\begin{equation}\label{eq:discrete-derivative}
D_j u(x) = \frac{u(x)-u(x^{(j)})}{2}.
\end{equation}
It follows that if the $j$th-coordinate sensitivity of $u$ is at most
$\beta$, then $\norm{D_j u}_\infty \le \frac{\beta}{2},$ where
$\norm{u}_\infty = \max_{x\in\{-1,1\}^m}\abs{u(x)}.$

We also use the following standard inequalities. First, McDiarmid's
bounded-differences inequality \cite{mcdiarmid1989} states that if
$X=f(W_1,\ldots,W_N)$ is a function of independent random variables and the
$j$th-coordinate sensitivity of $f$ is at most $c_j$, then, for every
$t>0$,
\[
\Pp\{X-\E X\ge t\}
\le
\exp\left(
-\frac{2t^2}{\sum_{j=1}^N c_j^2}
\right).
\]
Second, the discrete Poincar\'e inequality on the Rademacher cube
\cite{ledoux2001} gives
\[
\operatorname{Var}(u)
\le
\sum_{j=1}^m
\E\abs{D_j u}^2.
\]

The following moment form of the bounded-differences inequality will also
be used repeatedly.

\begin{lemma}\label{lem:bd}
Let $X=f(W_1,\ldots,W_N)$, where $W_1,\ldots,W_N$ are independent.
Suppose that $\E X=0$ and that the $j$th-coordinate sensitivity of $f$ is
at most $c_j$. Then, for every $p\ge2$,
\[
\norm{X}_{L_p}
\le
\left(
\frac p2\sum_{j=1}^N c_j^2
\right)^{1/2}.
\]
\end{lemma}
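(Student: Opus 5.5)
We will derive the moment bound from McDiarmid's tail inequality stated just above, by integrating the tail. Write $\sigma^2=\sum_{j=1}^N c_j^2$. Applying McDiarmid to $f$ and to $-f$ (the latter has the same sensitivities) and using $\E X=0$ gives the two-sided estimate
\[
\Pp\{\abs{X}\ge t\}\le 2\exp\left(-\frac{2t^2}{\sigma^2}\right),\qquad t>0.
\]
So $X$ is sub-Gaussian with variance proxy of order $\sigma^2/4$. It remains to convert this into an $L_p$ bound with the stated constant $\left(\frac p2\sigma^2\right)^{1/2}$.

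The direct route is the layer-cake formula:
\[
\E\abs{X}^p=\int_0^\infty p t^{p-1}\Pp\{\abs{X}\ge t\}\,dt\le 2p\int_0^\infty t^{p-1}e^{-2t^2/\sigma^2}\,dt=p\,\Gamma\!\left(\frac p2\right)\left(\frac{\sigma^2}{2}\right)^{p/2}.
\]
Taking $p$th roots leaves us to check
\[
\left(p\,\Gamma(p/2)\right)^{1/p}\left(\frac{\sigma^2}{2}\right)^{1/2}\le\left(\frac{p\sigma^2}{2}\right)^{1/2},
\]
which is equivalent to $p\,\Gamma(p/2)\le p^{p/2}$ for $p\ge2$. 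At $p=2$ both sides equal $2$. For larger $p$, the bound $\Gamma(p/2)\le (p/2)^{p/2-1}$, valid for $p\ge2$, gives
\[
p\,\Gamma(p/2)\le 2\,(p/2)^{p/2}=2^{1-p/2}p^{p/2}\le p^{p/2}.
\]
The Gamma bound itself follows from log-convexity of $\Gamma$, or directly from $\Gamma(x)\le x^{x-1}$ for $x\ge1$.

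The only step needing care is this constant bookkeeping: the factor $2$ from the two-sided tail has to be absorbed. The computation above shows it is absorbed with room to spare, since $2^{1-p/2}\le1$ whenever $p\ge2$.

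If that verification had turned out tight, a fallback would be a martingale (Doob) decomposition $X=\sum_j \Delta_j$ with $\abs{\Delta_j}\le c_j$, handled with Hoeffding's lemma in exponential form, followed by the same Gamma-integral step. We do not expect to need it.
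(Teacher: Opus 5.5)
Your proposal is correct and is essentially the paper's proof: a two-sided McDiarmid tail, then layer-cake integration giving $\E\abs{X}^p\le p\,\Gamma(p/2)(\sigma^2/2)^{p/2}=2\,\Gamma(p/2+1)(\sigma^2/2)^{p/2}$, then the bound $\Gamma(x+1)\le x^x$ for $x\ge1$ (equivalent to your $\Gamma(x)\le x^{x-1}$), with the leftover factor $2^{1-p/2}\le1$ absorbed for $p\ge2$. Your remark that log-convexity yields the Gamma bound is sound: interpolate between $\Gamma(n)=(n-1)!$ and $\Gamma(n+1)=n!$ to get $\Gamma(n+\theta)\le n^{n+\theta-1}$. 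This is slightly more justification than the paper gives.
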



\section{Main Result}\label{sec:main}

Let $Z=(Z_1,\ldots,Z_n)$ have independent coordinates, where $Z_i$ takes values in a measurable space $\mathcal Z_i$.

\begin{theorem}\label{thm:product}
Let $g_1,\ldots,g_n$ be real-valued measurable functions of $Z$. Assume that, for every $i\in[n]$,
\[
\abs{\E[g_i(Z)\mid Z_i]}\le M
\quad\text{a.s.},
\qquad
\E[g_i(Z)\mid Z_{-i}]=0
\quad\text{a.s.},
\]
and changing coordinate $Z_j$ changes $g_i$ by at most $\beta$ for every $j\neq i$.
Then, for every $p\ge2$, 
\[
\norm{\sum_{i=1}^n g_i(Z)}_{L_p} \le 16pn\beta+M\sqrt{2pn}.
\]
\end{theorem}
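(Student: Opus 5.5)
We decompose $g_i(Z)=\zeta_i(Z_i)+h_i(Z)$ with $\zeta_i(Z_i)=\E[g_i(Z)\mid Z_i]$. The $\zeta_i(Z_i)$ are independent (each depends only on $Z_i$), centered (since $\E\zeta_i=\E\,\E[g_i\mid Z_{-i}]=0$) and bounded by $M$. Lemma~\ref{lem:bd} applied to $\sum_i\zeta_i(Z_i)$, whose $i$th-coordinate sensitivity is at most $2M$, gives
\[
\norm{\sum_i\zeta_i}_{L_p}\le \sqrt{\tfrac p2\cdot 4M^2 n}=M\sqrt{2pn},
\]
which is exactly the second term. The residuals $h_i=g_i-\zeta_i$ satisfy $\E[h_i\mid Z_i]=0$ and $\E[h_i\mid Z_{-i}]=0$. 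Since $\zeta_i$ depends only on $Z_i$, the $j$th-coordinate sensitivity of $h_i$ is still at most $\beta$ for every $j\ne i$. It therefore remains to show that $\norm{\sum_i h_i}_{L_p}\le 16pn\beta$ for such doubly centered functions.

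\textbf{Cube step.} We first treat $Z=\varepsilon\in\{-1,1\}^n$. There $\E[h_i\mid\varepsilon_{-i}]=0$ forces $h_i(\varepsilon)=\varepsilon_i a_i(\varepsilon_{-i})$, and $\E[h_i\mid\varepsilon_i]=0$ forces $\E a_i=0$. Moreover $a_i$ has coordinate sensitivity at most $\beta$, so $\norm{D_j a_i}_\infty\le\beta/2$. We aim for an exponential moment bound of the form $\E\exp(\lambda\sum_i\varepsilon_i a_i)\le C$ for $\lambda\asymp 1/(n\beta)$, after first reducing the $L_p$ question to a rescaled statement, e.g.\ by tensorizing or by considering $p$ independent blocks so that $\lambda\asymp 1/(n\beta)$ yields the $pn\beta$ growth. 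Expanding $\exp(\lambda\varepsilon_ia_i)$ via $\cosh+\varepsilon_i\sinh$ turns the exponential moment into a product $\prod_i(\cosh(\lambda a_i)+\varepsilon_i\sinh(\lambda a_i))$. Averaging over the cube, this product is a sum over sign patterns, which we read as the expected number of fixed points of the random map $x\mapsto(\operatorname{sign}\text{-type choices depending on }a_i(x_{-i}))$. We then bound that count by its first and second factorial moments, using the smallness of $D_j a_i$ and Poincar\'e to control the variance $\operatorname{Var}(a_i)\le n\beta^2/4$. The result should be a dimension-free constant, which converts into $\norm{\sum_i\varepsilon_ia_i}_{L_p}\lesssim pn\beta$.

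\textbf{Transfer step.} For general $Z$, take an independent copy $Z'$ and Rademacher signs $\varepsilon$, and set $Z^\varepsilon_i=Z_i$ if $\varepsilon_i=1$ and $Z^\varepsilon_i=Z'_i$ otherwise. Then $Z^\varepsilon\overset{d}{=}Z$. Conditionally on $(Z,Z')$, the functions $u_i(\varepsilon)=h_i(Z^\varepsilon)$ live on the cube with sensitivity at most $\beta$ in the coordinates $j\ne i$. They are not exactly doubly centered on the cube, however, so we write each $u_i$ as its doubly centered projection plus the correction $\E_\varepsilon[u_i\mid\varepsilon_i]+\E_\varepsilon[u_i\mid\varepsilon_{-i}]-\E_\varepsilon u_i$. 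The doubly centered part is handled by the cube step, conditionally on $(Z,Z')$. The correction terms are functions of $(Z,Z')$ and a single sign, or averages; their sums are centered, because $\E[h_i\mid Z_i]=\E[h_i\mid Z_{-i}]=0$. We control these by Lemma~\ref{lem:bd} over the $2n$ variables $(Z,Z')$. Each coordinate change moves the sum of the correction terms by $O(n\beta)$ plus $O(1)$ times the range of a single summand; the latter is handled by the explicit centering structure. Since $\sqrt{p\cdot n\cdot(n\beta)^2}\le pn\beta$ for $p\ge2$, this gives a bound of order $\sqrt p\,n\beta\le pn\beta$. Adding up the constants should give $16$.

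\textbf{Main obstacle.} The hardest part is the cube exponential-moment estimate: making the fixed-point interpretation precise and obtaining a second factorial moment bound independent of $n$. This is what replaces the multiscale argument of \cite{bkz2020}. A secondary difficulty is checking that the correction terms in the transfer step really have bounded differences of order $n\beta$ rather than $M$.
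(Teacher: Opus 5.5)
Your reduction to doubly centered $h_i$ matches the paper's and is correct, and your cube step identifies the right mechanism. The genuine gap is in the transfer step.

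The correction you subtract from $u_i(\varepsilon)=h_i(Z^\varepsilon)$ contains $\E_\varepsilon[u_i\mid\varepsilon_{-i}]$. This term depends on all of $\varepsilon_{-i}$, not on a single sign, and it is not small. Let $X=Z^\varepsilon$ and let $Y$ be the complementary selection. Then $X,Y$ are independent copies of $Z$, independent of $\varepsilon$, with $\{Z_i,Z_i'\}=\{X_i,Y_i\}$, and
\[
\sum_i\E_\varepsilon[u_i\mid\varepsilon_{-i}]=\frac12\sum_i h_i(X)+\frac12\sum_i h_i(Y_i,X_{-i}).
\]
Since $\E[h_i\mid Z_{-i}]=0$, the conditional expectation given $X$ is $\frac12\sum_ih_i(X)$. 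By Jensen, the $L_p$ norm of your correction is therefore at least $\frac12\norm{\sum_i h_i(Z)}_{L_p}$: the ``correction'' contains half of the quantity you are trying to bound, so no bounded-differences argument can make it small.

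Your closing arithmetic is also wrong: $\sqrt{p\cdot n\cdot(n\beta)^2}=\sqrt p\,n^{3/2}\beta$, not $\sqrt p\,n\beta$. Feeding per-coordinate sensitivities of order $n\beta$ into Lemma~\ref{lem:bd} reproduces exactly the classical $\sqrt n$ loss that the theorem is meant to remove.

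Two ideas are missing.
\begin{itemize}
\item[(i)] \emph{Symmetrize before randomizing.} By $\E[h_i\mid Z_{-i}]=0$ and Jensen, $\norm{\sum_ih_i(Z)}_{L_p}\le\norm{\sum_i[h_i(Z)-h_i(Z_i',Z_{-i})]}_{L_p}$. By exchangeability of each pair, this equals $2\norm{\sum_iQ_i(\varepsilon;W)}_{L_p}$ with $Q_i=\frac12[h_i(Z^\varepsilon)-h_i(Z^{\varepsilon^{(i)}})]=D_iu_i$. Each $Q_i$ is odd in $\varepsilon_i$, so the $\E[\cdot\mid\varepsilon_{-i}]$ part never arises.
\item[(ii)] \emph{Use a two-level bound for the remaining defect} $\sum_i\varepsilon_ib_i(W)$, where $b_i=\E_\varepsilon[\varepsilon_iQ_i]$, rather than one McDiarmid over $(Z,Z')$. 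Conditionally on $W$ this is a Rademacher sum, with $L_p$ norm at most $\sqrt{2p}\,(\sum_ib_i^2)^{1/2}$. Each $b_i$ has $\E[b_i\mid W_i]=0$ and sensitivity $\beta$ in the other $n-1$ pairs, so $\norm{b_i}_{L_p}\le\beta\sqrt{(n-1)p/2}$. The triangle inequality in $L_{p/2}$ then gives $pn\beta$.
\end{itemize}

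Your cube step also has a smaller hole. The quantity $\E\prod_i(\cosh(\lambda a_i)+\varepsilon_i\sinh(\lambda a_i))$ is not an expected number of fixed points, because the two factors for $\varepsilon_i=\pm1$ sum to $2\cosh(\lambda a_i)\neq2$. It is a fixed-point count weighted by $\prod_i\cosh(\lambda a_i)\approx e^{\lambda^2H^2/2}$, with $H^2=\sum_ia_i^2$, and this weight can be exponentially large in $n$ on part of the cube. To fix this you need three things:
\begin{itemize}
\item split the weight off, as the paper does with $e^x\le(1+2x)^{1/2}e^{4x^2}$ and Cauchy--Schwarz;
\item show that an exponential moment of $\lambda^2H^2$ is $O(1)$, which requires sub-Gaussian concentration of $H$ (McDiarmid, with sensitivity $\sqrt{n-1}\,\beta$), not only the Poincar\'e variance bound;
\item use $\norm{a_i}_\infty\le(n-1)\beta/2$, so that $\lambda a_i$ is uniformly small.
\end{itemize}
No tensorization or blocking is needed: $x^p\le(p/(e\lambda))^pe^{\lambda x}$ converts an $O(1)$ exponential moment at $\lambda=c/(n\beta)$ directly into $O(pn\beta)$.
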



\subsection{Reduction to a Doubly Centered Interaction Term}\label{subsec:reduction}

We first split each $g_i$ into its conditional mean given $Z_i$ and a remaining doubly centered interaction term. Specifically, for each $i \in [n]$  define
\[
\zeta_i(Z_i)=\E[g_i(Z)\mid Z_i], \qquad h_i(Z)=g_i(Z)-\zeta_i(Z_i).
\]
Then, by the triangle inequality, 
\[
\left\|\sum_{i=1}^n g_i(Z) \right\|_{L_p} = \left \| \sum_{i=1}^n h_i(Z) + \sum_{i=1}^n \zeta_i(Z_i) \right\|_{L_p} \le \left\|\sum_{i=1}^n h_i(Z) \right\|_{L_p} + \left\| \sum_{i=1}^n \zeta_i(Z_i) \right\|_{L_p}. 
\]
We first control the second term. Since $\E[g_i\mid Z_{-i}]=0$, taking expectations gives $\E g_i=0$, and hence $\E \zeta_i=0$. Moreover, since each $\zeta_i(Z_i)$ depends only on $Z_i$, the independence of $Z_1,\ldots,Z_n$ implies that the variables $\zeta_i(Z_i)$ are independent. By assumption, they are also bounded in absolute value by $M$. Thus, changing a single coordinate $Z_j$ changes $\sum_i\zeta_i(Z_i)$ by at most $2M$, and Lemma~\ref{lem:bd} gives
\[
\norm{\sum_i \zeta_i(Z_i)}_{L_p} \le M\sqrt{2pn}.
\]
We now turn to the interaction terms $h_i$. By definition, $\E[h_i\mid Z_i]=0$. On the other hand, since $\zeta_i(Z_i)$ depends only on $Z_i$ and $Z_i$ is independent of $Z_{-i}$, we have $\E[\zeta_i(Z_i)\mid Z_{-i}]=\E\zeta_i=0$. Together with the assumption $\E[g_i\mid Z_{-i}]=0$, this yields
\[
\E[h_i\mid Z_i]=0, \qquad \E[h_i\mid Z_{-i}]=0.
\]
Thus, each $h_i$ satisfies both centering conditions. Finally, subtracting $\zeta_i(Z_i)$ does not affect the change caused by coordinate $Z_j$ for $j\neq i$, since $\zeta_i(Z_i)$ does not depend on $Z_j$. Therefore, it remains to bound $\|\sum_i h_i(Z)\|_{L_p}$ under these two centering conditions and the original coordinate sensitivity assumption.

\subsection{The Doubly Centered Estimate on the Rademacher Cube}\label{subsec:cube}

The two centering conditions obtained in Subsection~\ref{subsec:reduction} have a particularly rigid form on the Rademacher cube. This is the setting in which we prove the key interaction estimate.

Let $\varepsilon=(\varepsilon_1,\ldots,\varepsilon_m)$ be uniformly distributed on $\{-1,1\}^m$. By the notation of Section~\ref{sec:preliminaries}, the required cube estimate is the following proposition.
\begin{proposition}\label{prop:cube}
Let $h_i:\{-1,1\}^m\to\mathbb R$, $i\in[m]$, satisfy \(\E_i h_i=0\) and \( \E[h_i\mid\varepsilon_i]=0 \). Assume that for every $i\neq j$,
\[
\sup_{x\in\{-1,1\}^m} \abs{h_i(x)-h_i(x^{(j)})} \le\gamma.
\]
Then, for every $m\ge1$ and $p\ge2$, there is a universal constant $C$ such that
\[
\norm{\sum_{i=1}^m h_i}_{L_p} \le Cmp\gamma.
\]
\end{proposition}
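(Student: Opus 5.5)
Let $F=\sum_{i=1}^m h_i$. The plan is to bound an exponential moment of $F$ on the cube and then convert it into $L_p$ bounds. First I record the structure that the two centering conditions force. $\E_i h_i=0$ means $h_i$ is odd in $\varepsilon_i$, so $h_i(x)=x_i\,\phi_i(x_{-i})$ with $\phi_i=D_i h_i/x_i$ depending only on $x_{-i}$. The condition $\E[h_i\mid\varepsilon_i]=0$ then says $\E\phi_i=0$. The sensitivity assumption gives that each $\phi_i$ has $j$th-coordinate sensitivity at most $\gamma$ for $j\neq i$. Lemma~\ref{lem:bd} therefore gives $\norm{\phi_i}_{L_p}\le\gamma\sqrt{pm/2}$, and $\abs{\phi_i}\le\gamma(m-1)$ everywhere. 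These estimates alone only yield $\norm{F}_{L_p}\lesssim m\gamma\sqrt{pm}$ via the triangle inequality, which is too weak. The gain has to come from the cancellation between the signs $x_i$ and the functions $\phi_i(x_{-i})$.

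For the main step I would aim for a bound of the form $\E e^{\lambda F}\le C_0$ for all $\abs{\lambda}\le c/(m\gamma)$, with $C_0$ and $c$ universal. Given this, the elementary inequality $\abs{t}^p\le (p/e)^p\lambda^{-p}(e^{\lambda t}+e^{-\lambda t})$ yields $\norm{F}_{L_p}\le (2C_0)^{1/p}\,p/(e\lambda)\lesssim pm\gamma$, which is exactly the claim. To bound the exponential moment, I would write $\E e^{\lambda F}=2^{-m}\sum_x\prod_i e^{\lambda x_i\phi_i(x_{-i})}$ and expand each factor as $\cosh(\lambda\phi_i)+x_i\sinh(\lambda\phi_i)$. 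The factor $\prod_i\cosh(\lambda\phi_i)$ is harmless, since $\lambda\abs{\phi_i}\le c$; the difficulty lies in the signed part. Following the outline, I would compare this with the quantity $\prod_i \frac{1+x_i\tanh(\lambda\phi_i(x_{-i}))}{2}$. This is the probability that a random map $T$, which sets coordinate $i$ to $+1$ with probability $(1+\tanh(\lambda\phi_i(x_{-i})))/2$, independently over $i$, fixes $x$. Summing over $x$ then gives $\E N$, where $N$ is the number of fixed points of $T$, and we need $\E N=O(1)$ uniformly in $m$.

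The main obstacle is bounding $\E N$ independently of $m$. I would use the second factorial moment, $\E N\le 1+\E N(N-1)$ (valid since $N\le 1+N(N-1)$ on the integers). Here $\E N(N-1)$ is a sum over pairs $x\neq y$ of $\Pp(Tx=x,\,Ty=y)$. For a pair differing on a set $S$ with $\abs S=k$, the coordinates in $S$ must satisfy incompatible sign demands. When $x$ and $y$ differ only in a few coordinates, the probabilities $p_i(x)$ and $p_i(y)$ nearly agree, since $\abs{\lambda}\,\abs{\phi_i(x)-\phi_i(y)}\le \lambda k\gamma$. Each $i\in S$ must therefore land on opposite values with probabilities nearly $\frac12\cdot\frac12$, which produces a factor roughly $4^{-k}e^{O(\lambda\gamma k^2)}$. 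For large $k$ I would instead use the mean-zero property $\E\phi_i=0$ together with McDiarmid's concentration, which keeps $\tanh(\lambda\phi_i)$ small for typical $x$. I expect the delicate part to be balancing the $\binom mk$ entropy of the pairs against these decay factors, using $\lambda\le c/(m\gamma)$. If the fixed-point route stalls, my fallback is a decoupling-type recursion: bound $\E e^{\lambda F}$ by iterating the Poincar\'e inequality on $e^{\lambda F/2}$, using $\abs{D_jF}\le\abs{\phi_j}+m\gamma/2$ and controlling $\sum_j \E\phi_j^2e^{\lambda F}$ through the concentration of each $\phi_j$.
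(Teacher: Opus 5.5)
Your skeleton matches the paper's: the representation $h_i=x_i\phi_i(x_{-i})$ with $\E\phi_i=0$, an exponential moment at scale $\lambda\asymp 1/(m\gamma)$, a fixed-point count, and a second factorial moment. But the two steps that carry the proof are not established, and the one you describe in detail points the wrong way.

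The central gap is the bound on $\E N$. The quantity $\Pp(Tx=x,\,Ty=y)$ depends on how the random map is coupled across different points $x$, and you leave this unspecified. Your heuristic, that each $i\in S$ lands on opposite values with probability about $\frac12\cdot\frac12$, is the independent-coins picture. A per-pair factor like $4^{-k}$ contains no $m^{-k}$ to cancel the entropy $\binom mk$. Worse, if the values $T(x)$ are drawn independently, then $\E N(N-1)=(\E N)^2-\sum_x\Pp(Tx=x)^2$, so $\E N\le 1+\E N(N-1)$ is vacuous.

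The missing idea is the monotone coupling. Draw a single vector $U$ of i.i.d.\ uniforms on $[-1,1]$ and set $T(x)_i=1$ iff $U_i\le a_i(x)$. Conditionally on $A_x=\{T(x)=x\}$, each $U_i$ is uniform on an interval of length at least $1-c$. For $i\in S$ the incompatible demands force $U_i$ to lie between $a_i(x)$ and $a_i(y)$. That interval has length at most $2c(k-1)/m$, because $a_i$ ignores coordinate $i$. Hence $\Pp(A_y\mid A_x)\le\bigl(2c(k-1)/((1-c)m)\bigr)^k$. Nearly equal thresholds make opposite outcomes nearly impossible, the reverse of your heuristic, and the factor $m^{-k}$ absorbs $\binom mk\le(em/k)^k$. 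This yields the self-bounding estimate $\E N(N-1)\le K\,\E N$, with $K$ a convergent geometric series in $2ec/(1-c)$, uniformly over all $k$. No separate large-$k$ concentration argument is needed.

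Second, $\prod_i\cosh(\lambda\phi_i)$ is not harmless just because $\lambda\abs{\phi_i}\le c$; that pointwise bound only gives $\cosh(c)^m$. What you need is $\prod_i\cosh(\lambda\phi_i)\le e^{\lambda^2H^2/2}$ with $H^2=\sum_i\phi_i^2$, together with a dimension-free bound on $\E e^{C\lambda^2H^2}$. That bound follows from Poincar\'e ($\E H^2\le m(m-1)\gamma^2/4$) and McDiarmid applied to $H$, whose coordinate sensitivity is at most $\sqrt{m-1}\,\gamma$.

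You must also split this factor from the signed product without destroying multilinearity. H\"older with exponent $r'>1$ turns $\prod_i(1+x_ia_i)$ into $\prod_i\alpha_i(1+x_i\tilde a_i)$ with $\alpha_i\ge1$, which reproduces the same problem. The paper handles both points with the pointwise inequality $e^x\le(1+2x)^{1/2}e^{4x^2}$ for $\abs{x}\le\frac14$. Cauchy--Schwarz then gives $\E e^{t_0F}\le\bigl(\E\prod_i(1+2t_0h_i)\bigr)^{1/2}\bigl(\E e^{8t^2H^2}\bigr)^{1/2}$, where the first factor is exactly a fixed-point count.

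Your fallback does not close either. The term $m\gamma/2$ in $\abs{D_jF}\le\abs{\phi_j}+m\gamma/2$ contributes $\lambda^2\sum_j(m\gamma/2)^2=c^2m/4$ to the recursion at $\lambda=c/(m\gamma)$. This is not dimension-free; it reflects the $m^{3/2}\gamma$ scale you already identified as too weak.
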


We first record the following representation of $h_i$. 
\begin{lemma}\label{lem:representation}
For each $i\in[m]$, there is a function $q_i:\{-1,1\}^{m-1}\to\mathbb R$, independent of $\varepsilon_i$, such that
\[
h_i(\varepsilon) = \varepsilon_i q_i(\varepsilon_{-i}),
\qquad 
\E q_i=0.
\]
Moreover, for every $j\neq i$,
\begin{equation}\label{eq:qi-differences}
\norm{D_jq_i}_\infty \le \frac{\gamma}{2}, 
\qquad \text{and} \qquad \norm{q_i}_\infty \le \frac{m-1}{2}\gamma.
\end{equation}
\end{lemma}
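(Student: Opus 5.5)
Since $\E_i h_i=0$, averaging over the two values of $\varepsilon_i$ with $\varepsilon_{-i}$ fixed gives $h_i(\varepsilon)+h_i(\varepsilon^{(i)})=0$. Hence $h_i$ is odd in $\varepsilon_i$. We will therefore define
\[
q_i(\varepsilon_{-i}) = \frac{h_i(\varepsilon)-h_i(\varepsilon^{(i)})}{2\varepsilon_i} = D_i h_i(\varepsilon)\,\varepsilon_i ,
\]
which by oddness is $\varepsilon_i h_i(\varepsilon)$ and does not depend on $\varepsilon_i$. Then $h_i=\varepsilon_i q_i(\varepsilon_{-i})$ follows from $\varepsilon_i^2=1$. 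For the centering, we will use the second condition: $0=\E[h_i\mid\varepsilon_i]=\varepsilon_i\,\E[q_i(\varepsilon_{-i})\mid\varepsilon_i]=\varepsilon_i\E q_i$, since $\varepsilon_{-i}$ is independent of $\varepsilon_i$. This gives $\E q_i=0$.

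For the difference bound, take $j\neq i$. Flipping $\varepsilon_j$ commutes with fixing $\varepsilon_i$, so
\[
q_i(x_{-i})-q_i(x^{(j)}_{-i}) = x_i\bigl(h_i(x)-h_i(x^{(j)})\bigr).
\]
The absolute value of the right side is at most $\gamma$ by hypothesis. By \eqref{eq:discrete-derivative} applied on the $(m-1)$-dimensional cube, this yields $\norm{D_jq_i}_\infty\le\gamma/2$.

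For the sup-norm bound, $q_i$ is a function on $\{-1,1\}^{m-1}$ with mean zero and coordinatewise sensitivity at most $\gamma$. We will fix any $x$ and let $\bar x=-x$ (all coordinates flipped). Walking from $x$ to $\bar x$ one coordinate at a time gives $\abs{q_i(x)-q_i(\bar x)}\le(m-1)\gamma$, but this alone gives only the oscillation. To get the factor $1/2$ we will use the mean-zero condition more carefully. Write $q_i(x)=\E_{y}[q_i(x)-q_i(y)]$ with $y$ uniform. Then $\abs{q_i(x)-q_i(y)}\le\gamma\,d_H(x,y)$, and $\E d_H(x,y)=(m-1)/2$. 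Hence $\abs{q_i(x)}\le\gamma(m-1)/2$.

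The only step needing care is this last averaging argument. The naive path bound loses the factor $2$, and the fix is to average the Hamming distance against the uniform measure. Everything else is a direct unpacking of the two centering conditions.
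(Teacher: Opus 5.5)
Your proof is correct and follows the paper's argument essentially step for step. Your $q_i=\varepsilon_i h_i$ coincides with the paper's $q_i=h_i(1,\varepsilon_{-i})$, you use $\E[h_i\mid\varepsilon_i]=0$ the same way for centering, the same coordinate-flip identity gives the $D_jq_i$ bound, and the sup-norm bound comes from the same averaging of the Hamming distance against an independent uniform point.
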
 

The exponential moment argument for Proposition~\ref{prop:cube} also uses the following product estimate.

\begin{lemma}\label{lem:product}
Suppose that $a_i:\{-1,1\}^m\to\mathbb R$ does not depend on coordinate $i$ and satisfies
\begin{equation}\label{eq:ai-assumptions}
\norm{a_i}_\infty \le c,
\qquad 
\norm{D_ja_i}_\infty \le \frac{c}{m}
\quad(j\neq i). 
\end{equation} 
for some $0 < c \le 1/(2e+2)$. Then there is a universal constant $C_0$ such that
\[
\E\prod_{i=1}^m \left(1+\varepsilon_i a_i(\varepsilon)\right) \le C_0.
\]
\end{lemma}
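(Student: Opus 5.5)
The plan is to read the expectation as the expected number of fixed points of a random self-map of the cube, and to bound that number through its second factorial moment using a self-improving (bootstrap) inequality.

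\textbf{Coupling.} Let $V_1,\ldots,V_m$ be i.i.d.\ uniform on $[-1,1]$, independent of $\varepsilon$. Define $T:\{-1,1\}^m\to\{-1,1\}^m$ by $T(x)_i=\operatorname{sign}(V_i+a_i(x))$. Then $\Pp(T(x)_i=x_i)=(1+x_ia_i(x))/2$, and these events are independent over $i$ for fixed $x$. Hence
\[
\prod_{i=1}^m\left(1+x_ia_i(x)\right)=2^m\,\Pp(T(x)=x),
\qquad
\E\prod_{i=1}^m\left(1+\varepsilon_ia_i(\varepsilon)\right)=\E N,
\]
where $N$ is the number of fixed points of $T$. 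For any nonnegative integer $N$ we have $N\le 1+N(N-1)$. It therefore suffices to bound
\[
\E N(N-1)=\sum_{x\neq y}\Pp\left(T(x)=x,\;T(y)=y\right).
\]
The shared $V_i$ are what correlate different points and make this sum small.

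\textbf{Pair estimate.} Fix $x,y$ differing exactly on a set $S$ with $|S|=d\ge1$.
\begin{itemize}
\item For $i\notin S$, the event that both points are fixed in coordinate $i$ has probability at most $(1+x_ia_i(x))/2$.
\item For $i\in S$ (say $x_i=1$, $y_i=-1$), we need $-a_i(x)\le V_i<-a_i(y)$. This interval has length at most $|a_i(x)-a_i(y)|\le \min(2c,\,2cd/m)$, by the hypothesis $\norm{D_ja_i}_\infty\le c/m$ and \eqref{eq:discrete-derivative}. So its probability is at most $cd/m$.
\end{itemize}
Independence of the $V_i$ then gives
\[
\Pp(\cdot)\le 2^{-(m-d)}\Bigl(\tfrac{cd}{m}\Bigr)^{d}\prod_{i\notin S}\left(1+x_ia_i(x)\right).
\]
Sum over $x$ with $S$ fixed; $y$ is determined by $x$ and $S$. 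This gives
\[
2^{d}\Bigl(\tfrac{cd}{m}\Bigr)^{d}\,\E\prod_{i\notin S}\left(1+\varepsilon_ia_i(\varepsilon)\right).
\]
The remaining expectation is a quantity of the same type, a product over a subfamily of the $a_i$. It still satisfies \eqref{eq:ai-assumptions} with the same $m$ in the denominator, which only helps.

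\textbf{Bootstrap.} Let $A$ be the supremum of $\E\prod_{i\in I}(1+\varepsilon_ia_i)$ over all index sets $I\subseteq[m]$ and all admissible families. It is finite, since it is at most $(1+c)^m$. Using $\binom md(d/m)^d\le e^d$, the previous steps give
\[
\E N\le 1+A\sum_{d\ge1}\binom md\Bigl(\tfrac{2cd}{m}\Bigr)^{d}\le 1+A\sum_{d\ge1}(2ec)^d .
\]
The same argument applies verbatim to every subfamily. Taking the supremum yields
\[
A\le 1+A\,\frac{r}{1-r},\qquad r=2ec .
\]
When $r/(1-r)<1$ this gives $A\le (1-r)/(1-2r)=:C_0$. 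The condition $c\le 1/(2e+2)$ should be exactly what makes the geometric ratio admissible, possibly after sharpening the constants (for instance using $\min(2c,2cd/m)$ or the $(1+c)$ factors on agreeing coordinates).

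\textbf{Main obstacle.} The delicate step is the pair estimate. One must keep the factor $\prod_{i\notin S}(1+x_ia_i(x))$ exactly, rather than bounding it by $(1+c)^{m-d}$, since the latter would reintroduce dimension dependence. The bootstrap closes only because of this structure. If the constant bookkeeping fails to give ratio below $1/2$, I would first refine the interval-length bound on the disagreeing coordinates to extract the missing factor.
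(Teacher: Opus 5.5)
Your architecture matches the paper's: fixed points of a random self-map, the second factorial moment, independence of the $V_i$ across coordinates, and short intervals on the disagreeing coordinates. As written, however, the bootstrap does not close on the stated range of $c$.

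\textbf{Where it fails.} You obtain $A\le 1+A\,r/(1-r)$ with $r=2ec$. This is informative only when $r<1/2$, i.e.\ $c<1/(4e)\approx 0.092$. At the endpoint $c=1/(2e+2)\approx 0.134$, which is exactly the value used in the proof of Proposition~\ref{prop:cube}, one has $r=e/(e+1)$ and $r/(1-r)=e>1$. The inequality then says nothing. The remedies you float do not help:
\begin{itemize}
\item $\min(2c,2cd/m)=2cd/m$ always, since $d\le m$;
\item the agreeing coordinates are already kept exactly;
\item even the sharper counting $N\le 1+\tfrac12N(N-1)$ only lowers the ratio to $e/2>1$.
\end{itemize}

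\textbf{The missing idea} is in the pair estimate. For $i\in S$, the function $a_i$ does not depend on coordinate $i$. So $a_i(x)$ and $a_i(y)$ are evaluated at points differing only on $S\setminus\{i\}$, and hence $\abs{a_i(x)-a_i(y)}\le 2c(d-1)/m$, not $2cd/m$. This has two effects:
\begin{itemize}
\item The $d=1$ term vanishes, since two neighbouring points can never both be fixed.
\item Each remaining term gains a factor $(1-1/d)^d\le e^{-1}$.
\end{itemize}
Your sum becomes $\sum_{d\ge 2}(2ec)^d(1-1/d)^d\le e^{-1}r^2/(1-r)$, which equals $e/(e+1)<1$ at $c=1/(2e+2)$. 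Your bootstrap then gives $A\le e+1$.

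\textbf{Comparison with the paper, once fixed.} Your route then differs from the paper's only in how the agreeing coordinates are handled.
\begin{itemize}
\item The paper bounds $\Pp(A_y\mid A_x)$. Conditionally on $A_x$, each $U_i$ is uniform on an interval of length at least $1-c$. This yields $\E N(N-1)\le K\,\E N$ for the same family, with no supremum over subfamilies. The price is a factor $(1-c)^{-1}$ per flipped coordinate.
\item You keep the agreeing factors exactly, which saves that factor and gives a better constant. You pay instead with the self-similar bootstrap over subfamilies, which is legitimate as you set it up after conditioning on $\varepsilon_{[m]\setminus I}$.
\end{itemize}
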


With these two ingredients stated separately, we now prove the cube estimate.

\begin{proof}[Proof of Proposition~\ref{prop:cube}] 
If $m = 1$ or $\gamma = 0$ the conclusion follows immediately from Lemma~\ref{lem:representation}. Hence assume $m \ge 2$ and $\gamma > 0$. 
Set $F := \sum_{i = 1}^m h_i$. For $x\geq0$, $t>0$, and every real $p>0$, maximizing $x^pe^{-tx}$ over $x\geq0$ gives 
\[ x^p\leq\left(\frac{p}{et}\right)^p e^{tx}.\]
Apply this with $x=\abs{F}$ and $t = \frac{c}{m\gamma}$ for a constant $c$ we choose later. We have for every real $p\geq2$, 
\[ \abs{F}^p \le \left(\frac{pm\gamma}{ec}\right)^p e^{t\abs{F}} \]
Then, taking the expectation yields
\[
  \norm{F}_{L_p} \leq \frac{pm\gamma}{ec} \left( \E e^{t\abs{F}} \right)^{\frac{1}{p}} \le \frac{pm\gamma}{ec} \left( \E e^{t\abs{F}} \right)^{\frac{1}{2}}
\]
Therefore, it remains to prove that $\E e^{t\abs{F}}$ is bounded by a universal constant. First, we use the elementary estimate $e^x \le (1+2x)^{1/2}e^{4x^2}\, \forall \abs x\le\frac14.$ Indeed, for $\psi(x) = \frac12\log(1+2x)+4x^2-x, \psi(0) =  0$ and taking the derivative gives 
\[
\frac{\mathrm{d}\psi}{\mathrm{d}x} = \frac{2x(3+8x)}{1+2x}. 
\]
Thus $\psi$ decreases on $[-1/4,0]$ and increases on $[0,1/4]$, so $\psi(x)\ge0$ on this interval. 

Applying this estimate for either $t_0 = t$ or $t_0=-t$ to each $t_0h_i$ gives
\begin{align} \label{eq:mgf-pointwise}
e^{t_0F} &\le \prod_{i=1}^m(1+2t_0h_i)^{1/2} \exp\left(4t^2\sum_{i=1}^mh_i^2 \right) \\
& =
\left(\prod_{i=1}^m(1+2t_0h_i)\right)^{1/2} e^{4t^2H^2} \qquad \left(\text{set } H := \left(\sum_{i = 1}^m h_i^2 \right)^{1/2} \right) \notag.
\end{align}
Taking the expectation and using the Cauchy--Schwarz inequality yields 
\begin{equation}\label{eq:mgf-cs}
\E e^{t_0F} \le \left( \E\prod_{i=1}^m(1+2t_0h_i)\right)^{1/2} \left(\E e^{8t^2H^2}\right)^{1/2}.
\end{equation}
We control the two factors in \eqref{eq:mgf-cs} separately. For the first one, set $a_i=2t_0q_i$. Since $h_i=\varepsilon_iq_i$, we have $1+2t_0h_i=1+\varepsilon_i a_i$, so Lemma~\ref{lem:product} applies once its hypotheses are verified below. It therefore remains to show that $\E e^{8t^2H^2}$ is bounded by a universal constant. 
To this end, we use the discrete Poincar\'e inequality on the Rademacher cube:
\[
\operatorname{Var}(u) \le \sum_{j=1}^m \E\abs{D_ju}^2.
\]
Since $\E q_i=0$, $D_iq_i=0$, and $\norm{D_jq_i}_\infty\le\gamma/2$ for $j\neq i$, we have 
\[
\E q_i^2 \le \sum_{j\neq i} \E\abs{D_jq_i}^2 \le \frac{m-1}{4}\gamma^2.
\]
Hence
\begin{equation} \label{eq:EH}
\E H \le (\E H^2)^{1/2} = \left(\sum_{i=1}^m\E h_i^2\right)^{1/2} =  \left(\sum_{i=1}^m\E q_i^2\right)^{1/2} \le \frac{\sqrt{m(m-1)}}{2}\gamma \le \frac{m\gamma}{2}.
\end{equation} 

For a flip of coordinate $j$, $q_j(\varepsilon)=q_j(\varepsilon^{(j)})$, while
$\abs{q_i(\varepsilon) - q_i(\varepsilon^{(j)})} \le \gamma
\, (i\neq j)$. The reverse triangle inequality in Euclidean space gives
\[
\abs{H(\varepsilon)-H(\varepsilon^{(j)})}
\le
\sqrt{m-1}\,\gamma.
\]
Therefore, McDiarmid's inequality yields
\begin{equation} \label{eq:Mc} 
\Pp\{H-\E H\ge \tau\} \le \exp\left( -\frac{2\tau^2}{m(m-1)\gamma^2} \right).
\end{equation}
Note that from \eqref{eq:EH}, $\E H + m\gamma s \le m\gamma \left( \frac12 + s \right)$. Hence, $ \left\{ H \ge m\gamma \left( \frac12 + s \right) \right\} \subseteq \left\{ H \ge \E H + m\gamma s \right\}$. It follows that with $\tau = m\gamma s$ in \eqref{eq:Mc},
\begin{align*}
\Pp \left\{ \frac{H}{m\gamma} \ge \frac{1}{2} + s \right\} 
& = \Pp \left\{ H \ge m\gamma\left( \frac12 + s \right) \right\} \\
& \le \Pp \{ H - \E H \ge m\gamma s \} \\
& \le \exp \left( -\frac{2ms^2}{m-1} \right) \\ 
& \le e^{-2s^2}. 
\end{align*}
Thus, setting $Y = \left( \frac{H}{m\gamma}-\frac12 \right)_+$, then $\Pp(Y>s)\le e^{-2s^2}$ and
\[
\left( \frac{H}{m\gamma} \right)^2 \le \left(Y+\frac12\right)^2 \le \frac12+2Y^2.
\]
For $0<\lambda<1$, using tail integration, we have 
\[
\E e^{2\lambda Y^2} = 1+ \int_0^\infty 4\lambda s e^{2\lambda s^2} \Pp(Y>s)\,ds
\le \frac{1}{1-\lambda}.
\]
Therefore, 
\[
\E \exp\left( \lambda\frac{H^2}{m^2\gamma^2}
\right) \le e^{\lambda/2} \E e^{2\lambda Y^2}
\le \frac{e^{\lambda/2}}{1-\lambda}.
\]
Now, choose $c = \frac{1}{2e+2}$. This choice gives $\|a_i\|_{\infty} \le 2t\frac{m-1}{2}\gamma \le c$ and for $j \neq i$, $\| D_ja_i\|_\infty \le 2t\frac{\gamma}{2} \le \frac{c}{m}$ which satisfy assumption \eqref{eq:ai-assumptions} in Lemma~\ref{lem:product}. Moreover, $\abs{t_0 h_i} = t\abs{q_i} \le \frac{c}{m\gamma}\cdot \frac{m-1}{2}\gamma = \frac{c(m-1)}{2m} < \frac{c}{2} < \frac14$ so that \eqref{eq:mgf-pointwise} applies. Finally, by the choice of $c$, $\lambda = 8c^2 < 1$ and hence $\E e^{8t^2H^2} \le \frac{e^{4c^2}}{1-8c^2}.$ In other words, $\E e^{8t^2H^2}$ is bounded by a universal constant as desired.   

Thus both factors on the right-hand side of \eqref{eq:mgf-cs} are bounded by universal constants, uniformly for $t_0=\pm t$. Hence $\E e^{tF}$ and $\E e^{-tF}$ are both uniformly bounded. Since $e^{t\abs{F}}\le e^{tF}+e^{-tF}$, we conclude that
\[
\E e^{t\abs{F}} \le \E e^{tF}+\E e^{-tF}
\]
is bounded by a universal constant, as required.
\end{proof} 

\subsection{Two-Copy Transfer to General Product Spaces}\label{subsec:transfer}

We now return to the doubly centered interaction term isolated in Subsection~\ref{subsec:reduction}. Pairing each original coordinate with an independent copy and introducing auxiliary Rademacher signs produces, conditionally on the pairs, exactly the cube structure controlled in Subsection~\ref{subsec:cube}.

\begin{proposition}\label{prop:transfer}
Let $h_1,\ldots,h_n$ be functions of arbitrary independent coordinates $Z_1,\ldots,Z_n$ satisfying
\begin{equation}\label{eq:double-center}
\E[h_i\mid Z_i]=0,
\qquad
\E[h_i\mid Z_{-i}]=0.
\end{equation}
Assume also that changing coordinate $Z_j$ changes $h_i$ by at most $\beta$ for every $j\neq i$.
Then, for every $p\ge2$,
\[
\norm{\sum_{i=1}^n h_i(Z)}_{L_p} \le 16pn\beta.
\]
\end{proposition}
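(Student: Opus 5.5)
We transfer Proposition~\ref{prop:cube} to general product spaces by a two-copy randomization. Let $Z'$ be an independent copy of $Z$ and let $\varepsilon=(\varepsilon_1,\ldots,\varepsilon_n)$ be independent Rademacher signs, independent of $(Z,Z')$. For each $i$ define the swapped vector $W(\varepsilon)$ by $W_i=Z_i$ if $\varepsilon_i=1$ and $W_i=Z'_i$ if $\varepsilon_i=-1$. Then $W(\varepsilon)\overset{d}{=}Z$, so $\norm{\sum_i h_i(Z)}_{L_p}=\norm{\sum_i h_i(W(\varepsilon))}_{L_p}$.

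Conditionally on the pairs $(Z,Z')$, the map $\varepsilon\mapsto h_i(W(\varepsilon))$ is a function on $\{-1,1\}^n$. Flipping $\varepsilon_j$ for $j\ne i$ changes one coordinate $W_j$, so its sensitivity is at most $\beta$. The cube centering conditions need not hold exactly, so we split off the defect. Set $u_i(\varepsilon)=h_i(W(\varepsilon))$ and
\[
\tilde u_i=u_i-\E_i u_i-\E[u_i\mid\varepsilon_i]+\E u_i ,
\]
where all expectations are over $\varepsilon$ with $(Z,Z')$ fixed. Then $\E_i\tilde u_i=0$ and $\E[\tilde u_i\mid\varepsilon_i]=0$. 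The sensitivity of $\tilde u_i$ in $\varepsilon_j$, $j\neq i$, is at most $2\beta$: the term $u_i$ contributes $\beta$, $\E_iu_i$ contributes $\beta$ (as an average), $\E[u_i\mid\varepsilon_i]$ does not depend on $\varepsilon_j$, and $\E u_i$ is constant. A cleaner bound of $\beta$ may also be available. Proposition~\ref{prop:cube} with $\gamma=2\beta$ then bounds $\norm{\sum_i\tilde u_i}_{L_p(\varepsilon\mid Z,Z')}\le 2Cnp\beta$ almost surely, and hence unconditionally.

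It remains to control the three correction sums.

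\textbf{The term $\E_iu_i$.} This equals $\tfrac12\bigl(h_i(W^{i\to Z_i})+h_i(W^{i\to Z'_i})\bigr)$, where $W^{i\to x}$ denotes $W$ with its $i$th coordinate replaced by $x$. Since $\E[h_i\mid Z_{-i}]=0$, this average is a fluctuation of the centered quantity, but it is not zero pointwise. The same defect appears in $\E[u_i\mid\varepsilon_i]$ and $\E u_i$.

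\textbf{Bounded-differences control.} Rather than treating the three pieces separately, I would group them as $R=\sum_i(\E_iu_i+\E[u_i\mid\varepsilon_i]-\E u_i)$, viewed as a function of all the independent variables $(Z,Z',\varepsilon)$. The key observation is that under the true law each summand has mean zero, because both centerings hold after averaging over $(Z,Z')$. Then:
\begin{itemize}
\item Changing $Z_j$ or $Z'_j$ alters each summand with index $i\ne j$ by $O(\beta)$. For $i=j$, the dependence on $Z_j$ is averaged out by the $\E_i$ structure, up to the centering.
\item Changing $\varepsilon_j$ alters the summands for $i\ne j$ by $O(\beta)$. The $i=j$ summands are unchanged, or changed by $O(n\beta)$ at worst.
\end{itemize}
So every coordinate has sensitivity $O(n\beta)$. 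With $3n$ coordinates, Lemma~\ref{lem:bd} gives $\norm{R}_{L_p}\lesssim\sqrt{p\cdot n\cdot n^2\beta^2}=n^{3/2}\sqrt p\,\beta$. This is too large.

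\textbf{Main obstacle.} The crude bound above is where the argument is delicate. To improve it, I would try first to show that $\E[u_i\mid\varepsilon_i]$ and $\E u_i$ are deterministic-free small quantities. They are averages over $\varepsilon_{-i}$ of $h_i$ at mixed points, which by the centering $\E[h_i\mid Z_i]=0$ concentrate at scale $\sqrt n\beta$. Their sum over $i$ is then a function of $(Z,Z')$ with coordinate sensitivity $O(\beta)$ per summand, hence $O(n\beta)$ in total. That yields only the scale $\sqrt{p n}\cdot n\beta$ again, unless the sum is organized so that each coordinate affects the total by $O(\sqrt n\beta)$ or is exactly centered.

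The intended route is therefore to choose the randomization so that the defect vanishes. For instance, use $u_i=\tfrac12\varepsilon_i\bigl(h_i(W^{i\to Z_i})-h_i(W^{i\to Z'_i})\bigr)$, which is exactly of the form $\varepsilon_iq_i$. The only additional centering term is then $\E_\varepsilon q_i$, and bounding it by bounded differences gives $O(pn\beta)$ after constants. Combining these pieces with careful constants should yield $16pn\beta$.
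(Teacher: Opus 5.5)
Your final paragraph is the paper's route: your $u_i=\tfrac12\varepsilon_i\bigl(h_i(W^{i\to Z_i})-h_i(W^{i\to Z'_i})\bigr)$ is exactly the paper's $Q_i(\varepsilon;W)$, and subtracting $\varepsilon_i\E_{\varepsilon_{-i}}q_i$ is the paper's correction $\varepsilon_i b_i(W)$. The first gap is that you never connect $\sum_i h_i(Z)$ to $\sum_i Q_i$. Your only link is $\norm{\sum_i h_i(Z)}_{L_p}=\norm{\sum_i h_i(W(\varepsilon))}_{L_p}$, and
\[
h_i(W(\varepsilon))=Q_i+\tfrac12\bigl[h_i(Z_i,W_{-i})+h_i(Z'_i,W_{-i})\bigr].
\]
The second term is exactly the $\E_iu_i$ defect you flagged as nonzero pointwise. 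Redefining $u_i$ does not make it vanish; it just drops it. It has to be removed by a symmetrization step, and this is where $\E[h_i\mid Z_{-i}]=0$ enters. In the paper, since $Z'_i$ is independent of $Z$, $\E[h_i(Z'_i,Z_{-i})\mid Z]=0$. Hence $\sum_i h_i(Z)=\E\bigl[\sum_i(h_i(Z)-h_i(Z'_i,Z_{-i}))\mid Z\bigr]$, and conditional Jensen gives $\norm{\sum_i h_i(Z)}_{L_p}\le 2\norm{\sum_i Q_i(\mathbf 1;W)}_{L_p}$. Exchangeability of each pair $(Z_j,Z'_j)$ then replaces $\mathbf 1$ by random signs. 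In your notation: given $\varepsilon$, $W(\varepsilon)$ is independent of the complementary selection, so $\E[Q_i\mid W(\varepsilon),\varepsilon]=\tfrac12 h_i(W(\varepsilon))$. This step is also the source of the factor $2$ in $2(C+1)<16$.

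Second, your claim that bounded differences on $\E_\varepsilon q_i$ ``gives $O(pn\beta)$'' conflicts with your own earlier computation, where bounded differences applied jointly to a sum of $n$ such averages gave $n^{3/2}\sqrt p\,\beta$. The missing mechanism is that the defect enters as $\sum_i\varepsilon_i b_i(W)$ with independent signs, so you must not apply bounded differences to the sum jointly in $W$. Instead, conditionally on $W$, Lemma~\ref{lem:bd} in $\varepsilon$ gives the bound $\sqrt{2p}\,B(W)$ with $B=(\sum_ib_i^2)^{1/2}$. Each $b_i$ separately satisfies $\E[b_i\mid W_i]=0$. This is where $\E[h_i\mid Z_i]=0$ is used: after averaging over $W_{-i}$ and $\varepsilon_{-i}$, the vector $Z^{\varepsilon}_{-i}$ is an independent draw of $Z_{-i}$. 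Each $b_i$ also has sensitivity $\beta$ in every other pair, so $\norm{b_i}_{L_p}\le\beta\sqrt{(n-1)p/2}$; this is the $\sqrt n\,\beta$ scale you noticed, and the signs prevent these terms from adding coherently. The triangle inequality in $L_{p/2}$ gives $\norm{B}_{L_p}\le(\sum_i\norm{b_i}_{L_p}^2)^{1/2}\le n\beta\sqrt{p/2}$, so the defect costs at most $pn\beta$. With these two steps added, your outline becomes the paper's proof.
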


\begin{proof}
Let $Z'=(Z'_1,\ldots,Z'_n)$ be an independent copy of $Z$ and set
\(
W_i=(Z_i^+,Z_i^-):=(Z_i,Z'_i), \ W=(W_1,\ldots, W_n).\)
We also introduce independent unbiased signs $\varepsilon_1,\ldots,\varepsilon_n$, independent of $W$, and use them to select one element from each pair $W_i$ by defining
\[
Z_i^\varepsilon=
\begin{cases}
Z_i^+,&\varepsilon_i=1,\\
Z_i^-,&\varepsilon_i=-1.
\end{cases}
\]
For each $i$, let $\varepsilon^{(i)}$ denote the sign vector obtained from $\varepsilon$ by flipping its $i$th coordinate. Conditionally on $W$, we then define
\[
Q_i(\varepsilon;W)
=
\frac12\left[
h_i(Z^\varepsilon)-h_i(Z^{\varepsilon^{(i)}})
\right].
\]

We first relate the original sum $\sum_i h_i(Z)$ to the randomized quantities $Q_i$. By the second centering identity in \eqref{eq:double-center},
\[ h_i(Z) = \E_{Z'_i} \left[ h_i(Z)-h_i(Z'_i,Z_{-i}) \mid Z \right]. \]
Therefore, Jensen's inequality yields
\begin{align}
\norm{\sum_{i=1}^nh_i(Z)}_{L_p} & \le \norm{ \sum_{i=1}^n \left[h_i(Z)-h_i(Z'_i,Z_{-i})\right]}_{L_p} \notag\\
& = 2\norm{\sum_{i=1}^nQ_i(\mathbf 1;W)}_{L_p}.
\label{eq:symmetrize}
\end{align}
The next observation allows us to replace the fixed orientation $\mathbf 1$ by random Rademacher signs. Fix a deterministic $\xi\in\{-1,1\}^n$ and swap the two entries of $W_j$ whenever $\xi_j=-1$. Since each pair $(Z_j,Z'_j)$ is exchangeable, this operation leaves the law of $W$ unchanged, while it transforms $Q_i(\mathbf1;W)$ into $Q_i(\xi;W)$. Averaging this identity over an independent uniform choice $\xi=\varepsilon$ therefore gives
\begin{equation}\label{eq:orientation}
\norm{\sum_iQ_i(\mathbf1;W)}_{L_p} = \norm{\sum_iQ_i(\varepsilon;W)}_{L_p}.
\end{equation}

We now examine the structure of $Q_i$ as a function of the Rademacher signs. For fixed $W$, flipping $\varepsilon_i$ reverses the sign of $Q_i$, so that $Q_i(\varepsilon^{(i)};W)=-Q_i(\varepsilon;W)$. Consequently, there exists a function $r_i(\varepsilon_{-i};W)$ such that
\[ Q_i(\varepsilon;W) = \varepsilon_i r_i(\varepsilon_{-i};W). \]
To impose the second centering condition required by Proposition~\ref{prop:cube}, we define
\[
b_i(W) = \E_{\varepsilon_{-i}} r_i(\varepsilon_{-i};W), \qquad \widetilde Q_i(\varepsilon;W) = Q_i(\varepsilon;W)-\varepsilon_i b_i(W).
\]
Conditionally on $W$, this definition ensures that
\[ \E[\widetilde Q_i\mid\varepsilon_{-i},W]=0, \qquad \E[\widetilde Q_i\mid\varepsilon_i,W]=0. \]
Moreover, for every $j\neq i$, the coordinate sensitivity assumption on $h_i$ implies
\[ \abs{Q_i(\varepsilon;W)-Q_i(\varepsilon^{(j)};W)} \le\beta.\]
Since the correction term $\varepsilon_i b_i(W)$ does not depend on $\varepsilon_j$ when $j\neq i$, subtracting it does not change this sensitivity bound. Proposition~\ref{prop:cube}, applied conditionally on $W$, therefore gives
\begin{equation}\label{eq:Qtilde}
\norm{\sum_{i=1}^n\widetilde Q_i}_{L_p(\varepsilon\mid W)} \le Cpn\beta.
\end{equation}

It remains to control the centering defect introduced by the terms $b_i(W)$. To make their dependence on $W$ explicit, fix a pair $W_i=(u,v)$ and define
\[
d_i(x_{-i};u,v)
=
\frac12
\left[
h_i(u,x_{-i})-h_i(v,x_{-i})
\right].
\]
With this notation, the definition of $b_i(W)$ can be rewritten as
\begin{equation}\label{eq:b-representation}
b_i(W)
=
\E_{\varepsilon_{-i}}
d_i(Z_{-i}^{\varepsilon};Z_i^+,Z_i^-).
\end{equation}
When we average over $W_{-i}$ and $\varepsilon_{-i}$, the random vector $Z_{-i}^{\varepsilon}$ has the same law as an independent draw from the original distribution of $Z_{-i}$. The first centering identity in \eqref{eq:double-center} therefore implies that $\E[b_i(W)\mid W_i]=0$. Moreover, if $j\neq i$, replacing the pair $W_j$ by another pair changes the right-hand side of \eqref{eq:b-representation} by at most $\beta$. Hence, applying Lemma~\ref{lem:bd} conditionally on $W_i$ gives
\[
\norm{b_i}_{L_p} \le \beta\sqrt{\frac{(n-1)p}{2}}.
\]

To aggregate these bounds over $i$, let \( B(W)=\left(\sum_i b_i(W)^2\right)^{1/2} \). Applying the triangle inequality in $L_{p/2}$ to $\sum_ib_i^2$ and then taking square roots yields
\[
\norm{B}_{L_p} \le \left(\sum_{i=1}^n\norm{b_i}_{L_p}^2\right)^{1/2}.
\]
Consequently,
\begin{equation}\label{eq:B-moment}
\norm{B}_{L_p} \le \beta\sqrt{\frac{n(n-1)p}{2}} \le n\beta\sqrt{\frac p2}.
\end{equation}

We next use this estimate to control the Rademacher sum generated by the centering defect. Conditionally on $W$, flipping the sign $\varepsilon_i$ changes $\sum_i\varepsilon_i b_i(W)$ by $2\abs{b_i(W)}$. A second application of Lemma~\ref{lem:bd} therefore gives
\[
\norm{\sum_i\varepsilon_i b_i(W)}_{L_p(\varepsilon\mid W)}
\le
\sqrt{2p}\,B(W).
\]
Combining this conditional estimate with \eqref{eq:B-moment} yields
\begin{equation}\label{eq:defect}
\norm{\sum_i\varepsilon_i b_i(W)}_{L_p} \le pn\beta.
\end{equation}
Hence, the decomposition of $Q_i$ into $\widetilde Q_i$ and the centering defect, together with \eqref{eq:Qtilde} and \eqref{eq:defect}, gives
\[
\norm{\sum_iQ_i(\varepsilon;W)}_{L_p} \le (C+1)pn\beta.
\]
Finally, Proposition~\ref{prop:cube}, with the choice
$c=1/(2e+2)$, gives $C<7$. Therefore, $2(C+1) < 16.$ Combining this estimate with \eqref{eq:symmetrize} and
\eqref{eq:orientation}, we obtain
\[
\norm{\sum_{i=1}^n h_i(Z)}_{L_p} \le 16pn\beta.
\]
This completes the proof.
\end{proof}

\begin{proof}[Proof of Theorem~\ref{thm:product}]
By the decomposition in Subsection~\ref{subsec:reduction} and Proposition~\ref{prop:transfer}, we have
\[
\begin{aligned}
\left\|\sum_{i=1}^n g_i(Z)\right\|_{L_p} &\le \left\|\sum_{i=1}^n h_i(Z)\right\|_{L_p} + \left\|\sum_{i=1}^n \zeta_i(Z_i)\right\|_{L_p} \\
&\le 16pn\beta+M\sqrt{2pn},
\end{aligned}
\]
as desired. 
\end{proof}

\section{Uniformly Stable Algorithms}\label{sec:stable}

We have the following corollary as a consequence of our result for learning algorithms. Let $A$ be $\gamma$-uniformly stable and suppose that the loss satisfies $0\le\ell\le L$.

\begin{corollary}\label{cor:stable}
For every $p\ge2$,
\[
\norm{R(A_S)-R_{\mathrm{emp}}(A_S)}_{L_p}
\le
33p\gamma+L\sqrt{\frac{2p}{n}}.
\]
Consequently, for every $\delta\in(0,1)$, with probability at least $1-\delta$,
\begin{equation}\label{eq:stable-tail}
\abs{R(A_S)-R_{\mathrm{emp}}(A_S)}
\le
e\left[
33\gamma\log\left(\frac{e^2}{\delta}\right)
+
L\sqrt{\frac{2}{n}\log\left(\frac{e^2}{\delta}\right)}
\right].
\end{equation}
\end{corollary}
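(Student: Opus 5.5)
We follow the reduction of \cite{bkz2020} from stable learning to Theorem~\ref{thm:product}, and then convert the moment bound into a tail bound by Markov's inequality.

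\emph{Step 1: the moment bound.} Let $S'=(Z'_1,\ldots,Z'_n)$ be an independent copy of $S$. Write $S^{(i)}$ for $S$ with $Z_i$ replaced by $Z'_i$. Define
\[
g_i(S,S')=\E_{Z}\ell(A_{S^{(i)}},Z)-\ell(A_{S^{(i)}},Z_i),
\]
viewed as a function of the independent pairs $W_j=(Z_j,Z'_j)$.

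Uniform stability gives two estimates:
\[
\abs{n\bigl(R(A_S)-R_{\mathrm{emp}}(A_S)\bigr)-\sum_i g_i}\le 2n\gamma
\]
pointwise, after conditioning on $S'$ where needed. This holds because $\ell(A_S,\cdot)$ and $\ell(A_{S^{(i)}},\cdot)$ differ by at most $\gamma$ at every point.

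The functions $g_i$ have the required structure:
\begin{itemize}
\item $\E[g_i\mid W_{-i}]=0$, because $A_{S^{(i)}}$ does not depend on $Z_i$;
\item $\abs{\E[g_i\mid W_i]}\le L$, since $0\le\ell\le L$;
\item changing $W_j$ for $j\ne i$ moves $g_i$ by at most $2\gamma$.
\end{itemize}
Apply Theorem~\ref{thm:product} with $\beta=2\gamma$ and $M=L$ on the product space of pairs, conditionally on $S'$ or jointly. This gives $\norm{\sum_i g_i}_{L_p}\le 32pn\gamma+L\sqrt{2pn}$.

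Add the $2n\gamma$ error and divide by $n$. Using $2\le p$, we have $2\gamma\le p\gamma$, so the constants absorb to $33p\gamma+L\sqrt{2p/n}$. The \textbf{main obstacle} is bookkeeping the exact form of the BKZ reduction so that the stability constant is exactly $2\gamma$ and the error term absorbs into $p\gamma$. If the conditional centering fails in the form above, I will use BKZ's version of the reduction: condition on $S'$ and take $g_i$ centered in $Z_i$ given the rest.

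\emph{Step 2: the tail bound.} Markov gives $\Pp\{\abs{X}\ge e\norm{X}_{L_p}\}\le e^{-p}$. Choose $p=\log(e^2/\delta)$. Then $p\ge 2$ because $\delta<1$, and $e^{-p}=\delta/e^2\le\delta$. Substituting the Step~1 bound at this $p$ yields \eqref{eq:stable-tail}.
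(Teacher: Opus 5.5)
Your proposal is correct and follows the paper's proof: the BKZ reduction with $M=L$, $\beta=2\gamma$ and a deterministic $2n\gamma$ comparison error, then Theorem~\ref{thm:product}, the absorption $(32p+2)\gamma\le 33p\gamma$ using $p\ge 2$, and Markov's inequality at $q=\log(e^2/\delta)\ge 2$. Your explicit $g_i$ keeps $Z_i'$ as a variable, but it satisfies all three hypotheses of Theorem~\ref{thm:product} (either on the product of pairs $W_j$ or conditionally on $S'$), so it is only a cosmetic variant of the cited reduction. One small fix: state Markov for the strict event $\abs{X}>e\norm{X}_{L_p}$, so that the degenerate case $X=0$ is also covered.
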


\begin{proof}
The reduction of \cite[Lemma~7]{bkz2020} gives functions satisfying Theorem~\ref{thm:product} with $M=L$ and $\beta=2\gamma$, together with a deterministic comparison error at most $2\gamma n$ for the rescaled generalization gap. Applying Theorem~\ref{thm:product}, adding this error, and dividing by $n$ gives
\[
\norm{R(A_S)-R_{\mathrm{emp}}(A_S)}_{L_p} \le (32p+2)\gamma + L\sqrt{\frac{2p}{n}} \le 33p\gamma + L\sqrt{\frac{2p}{n}}.
\]
For the tail bound, set $q=\log(e^2/\delta)\ge2$. Markov's inequality gives
\[
\Pp\left\{ \abs{X}>e\norm{X}_{L_q} \right\} \le e^{-q} \le\delta.
\]
Applying this to the generalization error proves \eqref{eq:stable-tail}.
\end{proof}

\bibliographystyle{plain}
\bibliography{reference}

\newpage
\appendix 
\section{Proofs of Lemmas}

\subsection{Proof of Lemma~\ref{lem:bd}} \label{proof:lem-bd} 
\begin{proof}
Set $V=\sum_jc_j^2$ and using McDiarmid's inequality, we have
\[
\Pp\{\abs X>\tau\} \le 2\exp\left(-\frac{2\tau^2}{V}\right). \]
Therefore, 
\[
\E\abs X^p \le 2p\int_0^\infty t^{p-1}e^{-2t^2/V}\,\mathrm{d}t = 2\left(\frac V2\right)^{p/2}\Gamma\left(\frac p2+1\right) \le \left(\frac{Vp}{2}\right)^{p/2}.
\]
The last inequality follows from $\Gamma(x+1) \le x^x,\, \forall x \ge 1$. Taking $p$th roots proves the claim.
\end{proof} 

\subsection{Proof of Lemma~\ref{lem:representation}} \label{proof: lem-representation} 
\begin{proof}
Fix $i$ and $\varepsilon_{-i}$. The identity $\E_i h_i=0$ gives
\[
h_i(1,\varepsilon_{-i}) + h_i(-1,\varepsilon_{-i}) = 0.
\]
By defining $q_i(\varepsilon_{-i})=h_i(1,\varepsilon_{-i})$, we then have \(h_i(\varepsilon) = \varepsilon_iq_i(\varepsilon_{-i}).\)
Since $q_i$ is independent of $\varepsilon_i$, we obtain $0 = \E[h_i\mid\varepsilon_i] = \varepsilon_i\E q_i,$
so $\E q_i=0$. If $j\neq i$, flipping coordinate $j$ does not change $\varepsilon_i$, and therefore
\[
\abs{q_i(\varepsilon)-q_i(\varepsilon^{(j)})}
=
\abs{h_i(\varepsilon)-h_i(\varepsilon^{(j)})}
\le
\gamma.
\]
Equation \eqref{eq:discrete-derivative} gives the first part of \eqref{eq:qi-differences}.

For the supremum bound, fix $x\in\{-1,1\}^m$ and let $Y$ be an independent uniform point of the cube. Changing one coordinate at a time gives
\[
\abs{q_i(x)-q_i(Y)} \le \gamma\,d_H(x_{-i},Y_{-i}),
\]
where $d_H$ is Hamming distance. Since $\E q_i(Y)=0$,
\[
\abs{q_i(x)} \le \E\abs{q_i(x)-q_i(Y)} \le
\gamma\,\E d_H(x_{-i},Y_{-i}) = \frac{m-1}{2}\gamma.
\]
Taking the supremum over $x$ proves the second part of \eqref{eq:qi-differences}. 
\end{proof}

\subsection{Proof of Lemma~\ref{lem:product}} 
\begin{proof}
Let $U_1,\ldots,U_m$ be independent and uniform on $[-1,1]$. For $u=(u_1,\ldots,u_m)$, we define $\Phi_u:\{-1,1\}^m\to\{-1,1\}^m$ coordinatewise by
\[
\Phi_u(x)_i =
\begin{cases}
1,&u_i\le a_i(x),\\
-1,&u_i>a_i(x).
\end{cases}
\]
and let
\[
N(u) = \#\{x\in\{-1,1\}^m:\Phi_u(x)=x\}, \qquad A_x = \{\Phi_U(x)=x\}.
\]
Since $U_i$ are independent, we have 
\[
\Pp(A_x) = \prod_{i=1}^m \frac{1+x_i a_i(x)}{2}.
\]
and consequently,
\begin{equation}\label{eq:EN-product}
\E N = 2^{-m} \sum_{x\in\{-1,1\}^m} \prod_{i=1}^m (1+x_i a_i(x)) = \E \prod_{i=1}^m (1+\varepsilon_i a_i(\varepsilon)).
\end{equation} 
It remains to bound $\E N$.

Fix $x\in\{-1,1\}^m$. Let $S\subseteq[m]$, $\abs S=s$, and set $y=x^S$, meaning that all coordinates in $S$ are flipped. Conditional on $A_x$, the variables $U_i$ remain independent, and each is uniform on an interval of length at least $1-c$. For $i\in S$, the signs $x_i$ and $y_i$ are opposite. For both the $i$th coordinate conditions in $A_x$ and $A_y$ to hold, $U_i$ must lie between $a_i(x)$ and $a_i(y)$. Since $a_i$ does not depend on coordinate $i$, the points relevant to $a_i(x)$ and $a_i(y)$ differ in only the $s-1$ coordinates of $S\setminus\{i\}$. Therefore
\[
\abs{a_i(x)-a_i(y)}
\le
\frac{2c(s-1)}{m}.
\]
Ignoring the additional restrictions from coordinates outside $S$,
\begin{equation}\label{eq:conditional-Ay}
\Pp(A_y\mid A_x) \le \left( \frac{2c(s-1)}{(1-c)m} \right)^s.
\end{equation}
For $s=1$, the right-hand side is zero. Using \eqref{eq:conditional-Ay},
\begin{align*}
\E[N(N-1)]
&= \sum_x \Pp(A_x) \sum_{y\neq x} \Pp(A_y\mid A_x) \\
&\le K\,\E N,
\end{align*}
where
$K = \sum_{s=2}^m\binom{m}{s} \left( \frac{2c(s-1)}
{(1-c)m} \right)^s.$
Using inequality $\binom{m}{s}\le(em/s)^s$, we have
\[
K \le \sum_{s=2}^{\infty} \left( \frac{2ec}{1-c}\right)^s \left(1-\frac1s\right)^s.
\]
We have $r = \frac{2ec}{1-c} $, $ K^\star = \frac{1}{e}\cdot\frac{r^2}{1-r} < 2$ from $0 < c \le 1/(2e+2)$. Since $(1-1/s)^s\le e^{-1}$, it follows that 
\[
K \le \frac1e \sum_{s=2}^{\infty}r^s = \frac1e \cdot\frac{r^2}{1-r} < 2.
\]
Therefore, since $N \le 1 + \frac12N(N-1)$, we have 
\[ \E N \le 1+\frac12\E[N(N-1)] \le 1+\frac {K^\star}{2}\E N. \]
Since $1-K^\star/2 > 0,$ we get $\E N\le 2/(2-K^\star) = C_0$. Together with \eqref{eq:EN-product}, this proves the claim. 
\end{proof}

\end{document}